\documentclass{article}
\usepackage[utf8]{inputenc}

\usepackage{paper_header}

\title{Improving the Efficiency of Off-Policy Reinforcement Learning\\by Accounting for Past Decisions}

\author{
    Brett Daley\\
    Khoury College of Computer Sciences\\
    Northeastern University\\
    Boston, MA 02115, USA\\
    \texttt{daley.br@northeastern.edu}
    \and
    Christopher Amato\\
    Khoury College of Computer Sciences\\
    Northeastern University\\
    Boston, MA 02115, USA\\
    \texttt{c.amato@northeastern.edu}
}
\date{}

\begin{document}

\maketitle

\begin{abstract}
    \noindent
    Off-policy learning from multistep returns is crucial for sample-efficient reinforcement learning, particularly in the experience replay setting now commonly used with deep neural networks.
    Classically, off-policy estimation bias is corrected in a \textit{per-decision} manner:
    past temporal-difference errors are re-weighted by the instantaneous Importance Sampling (IS) ratio (via eligibility traces) after each action.
    Many important off-policy algorithms such as Tree Backup and Retrace rely on this mechanism along with differing protocols for truncating (``cutting'') the ratios (``traces'') to counteract the excessive variance of the IS estimator.
    Unfortunately, cutting traces on a per-decision basis is not necessarily efficient;
    once a trace has been cut according to local information, the effect cannot be reversed later, potentially resulting in the premature truncation of estimated returns and slower learning.
    In the interest of motivating efficient off-policy algorithms, we propose a multistep operator that permits arbitrary past-dependent traces.
    We prove that our operator is convergent for policy evaluation, and for optimal control when targeting greedy-in-the-limit policies.
    Our theorems establish the first convergence guarantees for many existing algorithms including Truncated IS, Non-Markov Retrace, and history-dependent TD($\lambda$).
    Our theoretical results also provide guidance for the development of new algorithms that jointly consider multiple past decisions for better credit assignment and faster learning.
\end{abstract}

\section{Introduction}

Reinforcement learning concerns an agent interacting with its environment through trial and error to maximize its expected cumulative reward \cite{sutton1998reinforcement}.
One of the great challenges of reinforcement learning is the temporal credit assignment problem \cite{sutton1984temporal}:
upon the receipt of a reward, which past action(s) should be held responsible and, hence, be reinforced?
Basic temporal-difference (TD) methods \cite[e.g.][]{watkins1989learning, rummery1994line} assign credit to the immediately taken action, bootstrapping from previous experience to learn long-term dependencies.
This process requires a large number of repetitions to generate effective behaviors from rewards, motivating research into \textit{multistep} return estimation
in which credit is distributed among multiple (even infinitely many) past actions according to some eligibility rule \cite[e.g.][]{sutton1988learning, watkins1989learning, sutton1998reinforcement}.
This can lead to significantly faster learning~\cite{sutton1998reinforcement}.

One challenge of multistep estimators is that they generally have higher variance than 1-step estimators~\cite{kearns2000bias}.
This is exacerbated in the \textit{off-policy} setting, where environment interaction is conducted according to a behavior policy that differs from the target policy for which returns are being estimated.
The discrepancy between the two policies manifests mathematically as bias in the return estimation, which can be detrimental to learning if left unaddressed.
Despite these challenges, off-policy learning is important for exploration and sample efficiency, especially when combined with experience replay \cite{lin1992self}, which has gained popularity due to deep reinforcement learning \cite[e.g.][]{mnih2015human}.
The canonical bias-correction technique is Importance Sampling (IS) wherein the bias due to the differing policies is eliminated by the product of their probability ratios.
Although IS theoretically resolves the off-policy bias, it can suffer from extreme variance that makes the algorithm largely untenable in practice.

Directly managing the variance of the IS estimator has been a fruitful avenue for developing efficient off-policy algorithms.
Past work has focused on modifying the individual IS ratios to reduce the variance of the full update:
e.g.\ Tree Backup \cite{precup2000eligibility}, Q($\lambda$) with Off-Policy Corrections \cite{harutyunyan2016q}, Retrace \cite{munos2016safe}, ABQ \cite{mahmood2017multi}, and C-trace~\cite{rowland2020adaptive}.
All of these methods can be implemented online with \textit{per-decision} rules \cite{precup2000eligibility} that determine how much to decay the IS ratio according to the current state-action pair---a process commonly called ``trace cutting'' in reference to the first method to utilize this technique, Watkins' Q($\lambda$) \cite{watkins1989learning, sutton1998reinforcement, munos2016safe}.
The re-weighted TD error is then broadcast to previous experiences using eligibility traces \cite{klopf1972brain, barto1983neuronlike, sutton1984temporal, sutton1998reinforcement}.
These algorithms are \textit{Markov} in the sense that each iterative off-policy correction depends on only the current state-action pair.
One issue with this strategy is that it may lead to suboptimal decisions, since cutting a trace cannot be reversed later, even as new information is ascertained.
In contrast, a \textit{non-Markov} method could examine an entire trajectory of past state-action pairs to make globally better decisions regarding credit assignment.

Indeed, some existing off-policy methods already conduct offline bias correction in a non-Markov manner.
Perhaps the simplest example is Truncated IS where the IS ratio products are pre-calculated offline and then clipped to some finite value (see Section~\ref{sect:example_algorithms}).
More recently, \cite{munos2016safe} suggested a recursive variant of Retrace that automatically relaxes the clipping bound when its historical trace magnitude becomes small;
the authors conjecture that this could lead to more efficient learning.
To our knowledge, no theoretical analysis has been conducted on non-Markov algorithms such as these,
meaning that their convergence properties are unknown, and the space of possible algorithms has not been fully explored.

To better understand the behavior of these non-Markov off-policy algorithms, and to support new discoveries of efficient algorithms, we propose a multistep operator $\mathcal{M}$ that accounts for arbitrary dependencies on past decisions.
Our operator is a significant generalization of the $\mathcal{R}$ operator proposed by \cite{munos2016safe} in that it allows an arbitrary, time-dependent weight for each TD error that may generally depend on the history of the Markov Decision Process (MDP).
We prove that our operator converges for policy evaluation and for control provided that the chosen weights never exceed the true IS ratio product.
We also remove the assumptions of increasingly greedy policies and pessimistic initialization used by \cite{munos2016safe} in the control setting.
Finally, we discuss practical considerations when implementing algorithms described by our operator.
Our results presented here provide new insights into off-policy credit assignment and open avenues for efficient empirical methods---with particularly interesting opportunities for deep reinforcement learning and experience replay.

\section{Preliminaries}

We consider MDPs of the form $(\mathcal{S}, \mathcal{A}, P, R, \gamma)$.
$\mathcal{S}$ and $\mathcal{A}$ are finite sets of states and actions, respectively.
Letting $\Delta X$ denote the set of distributions over a set $X$, then
${P \colon \mathcal{S} \times \mathcal{A} \to \Delta \mathcal{S}}$
is the transition function,
$R \colon \mathcal{S} \times \mathcal{A} \to \mathbb{R}$
is the reward function, and $\gamma \in [0,1]$ is the discount factor.

A policy $\pi \colon \mathcal{S} \to \Delta \mathcal{A}$ determines the agent's probability of selecting a given action in each state.
A Q-function $Q \colon \mathcal{S} \times \mathcal{A} \to \mathbb{R}$ represents the agent's estimate of the expected return achievable from each state-action pair.
For a policy $\pi$, we define the operator $P_\pi$ as
\begin{equation*}
    (P_\pi Q)(s,a) \coloneqq \sum_{s' \in \mathcal{S}} \sum_{a' \in \mathcal{A}} P(s'|s,a) \pi(a'|s') Q(s',a')
    .
\end{equation*}
As a shorthand notation, it is useful to represent Q-functions and the reward function as vectors in $\mathbb{R}^n$ where $n = |\mathcal{S} \times \mathcal{A}|$.
We define $e$ to be the vector whose components are equal to $1$.
Linear operators such as $P_\pi$ can hence be interpreted as $n \times n$ square matrices that multiply these vectors, with repeated application corresponding to exponentiation:
e.g.\ $P_\pi^t Q = P_\pi (P_\pi^{t-1} Q)$.
All inequalities in our work should be interpreted element wise when involving vectors or matrices.
Finally, we let $\norm{A} \coloneqq \norm{A}_\infty$ for a matrix or vector $A$.

In the \textit{policy evaluation} setting, we seek to estimate the expected discounted return for policy $\pi$, given by
\begin{equation*}
    Q^\pi \coloneqq \sum_{t \geq 0} \gamma^t P_\pi^t R
    .
\end{equation*}
$Q^\pi$ is the unique fixed point of the Bellman operator $T_\pi Q \coloneqq R + \gamma P_\pi Q$ \cite{bellman1966dynamic}.
In the \textit{control} setting, we seek to estimate the expected return under the optimal policy $\pi^*$, denoted by $Q^*$.
Letting $\Pi$ be the set of all possible policies, then $Q^*$ is the unique fixed point of the Bellman optimality operator $T Q \coloneqq R + \gamma \max\limits_{\pi' \in \Pi} P_{\pi'} Q$.

\subsection{Multistep Off-Policy Operators}

In our work, we are particularly interested in the \textit{off-policy} learning case, where trajectories of the form
$((s, a), (s_1, a_1), (s_2, a_2), \dots)$,
$(s_k,a_k) \in \mathcal{S} \times \mathcal{A}$,
are generated by interacting with the MDP using a behavior policy $\mu \in \Pi$, $\mu \neq \pi$.
Let $\mathcal{F}_t$ denote the first $t+1$ terms of this sequence.
We define the TD error for policy $\pi$ at timestep $t$ as
\begin{equation*}
    \delta^\pi_t \coloneqq r_t + \gamma \sum_{a' \in \mathcal{A}} \pi(a'|s_{t+1}) Q(s_{t+1}, a') - Q(s_t, a_t)
    .
\end{equation*}
Let $\rho_k \coloneqq \frac{\pi(a_k|s_k)}{\mu(a_k|s_k)}$ for brevity.
\cite{munos2016safe} introduced the following off-policy operator:
\begin{equation}
    \label{eq:retrace}
    (\mathcal{R}Q)(s,a) \coloneqq Q(s,a) + \E_\mu \Big[ \sum_{t \geq 0} \gamma^t \Big(\prod_{k=1}^t c(s_k,a_k)\Big) \delta^\pi_t \Big]
    ,
\end{equation}
where $c(s_k,a_k) \in [0, \rho_k]$ are arbitrary nonnegative coefficients, or \textit{traces}.
When $c(s_k,a_k) < \rho_k$, we say that the trace has been (partially) \textit{cut}.
Note that the traces are Markov in the sense that they depend only on the state-action pair $(s_k,a_k)$ and are otherwise independent of $\mathcal{F}_k$.
In other words, the traces for $\mathcal{R}$ can be calculated \textit{per decision} \cite{precup2000eligibility}, thereby permitting an efficient online implementation with eligibility traces.

While per-decision traces are convenient from a computational perspective, they require making choices about how much to cut a trace without direct consideration of the past.
This can lead to locally optimal decisions;
for example, if the trace is cut by setting $c(s_k,a_k) = 0$ at some timestep, then the effect cannot be reversed later.
Regardless of whatever new experiences are encountered by the agent, experiences before timestep $k$ will be ineligible for credit assignment, resulting in an opportunity cost.
In fact, this exact phenomenon is why Watkins' Q($\lambda$) \cite{watkins1989learning} often learns more slowly than Peng's Q($\lambda$) \cite{peng1994incremental} even though the former avoids off-policy bias \cite{sutton1998reinforcement, daley2019reconciling, kozuno2021revisiting}.
The same effect (but to a lesser extent) impacts Tree Backup and Retrace where $c(s_k,a_k) \leq 1$ always, implying that the eligibilities for past experiences can never increase.

For this reason, it may be desirable to expend some additional computation in exchange for better decisions regarding credit assignment.
Our principal contribution is the proposal and analysis of the following off-policy operator that encompasses this possibility:
\begin{equation}
    \label{eq:definition}
    (\mathcal{M}Q)(s,a) \coloneqq Q(s,a) + \E_\mu \Big[ \sum_{t \geq 0} \gamma^t \beta(\mathcal{F}_t) \delta^\pi_t \Big]
    ,
\end{equation}
where each $\beta(\mathcal{F}_t)$ is an arbitrary nonnegative coefficient that generally depends on the history $\mathcal{F}_t$.
By analogy to TD(0) \cite{sutton1988learning}, we define $\beta(\mathcal{F}_0) \coloneqq 1$.
The principal goal of Section~\ref{sect:analysis} is to characterize the values of $\beta(\mathcal{F}_t)$ for $t \geq 1$ that lead to convergence in the policy evaluation and control settings.

The major analytical challenge of our operator, and its main novelty, is the complex dependence on the sequence $\mathcal{F}_t$.
Our operator is therefore inherently \textit{non-Markov}, in spite of the fact that the MDP dynamics are Markov by definition.
Mathematically, this makes the $\mathcal{M}$ operator difficult to analyze, as the terms in the series
$1 + \beta(\mathcal{F}_1) + \beta(\mathcal{F}_2) + \cdots$
generally share no common factors that would allow a per-decision update for eligibility traces.
Nevertheless, removing the Markov assumption is necessary to understand important existing algorithms (see Section~\ref{sect:example_algorithms}) while also paving the way for new credit assignment algorithms.

A special case arises when $\beta(\mathcal{F}_t)$ does factor into traces:
that is, $\beta(\mathcal{F}_t) = \prod_{k=1}^t c(s_k,a_k)$ for every history $\mathcal{F}_t$.
Equation (\ref{eq:definition}) therefore reduces to equation (\ref{eq:retrace}), taking us back to the Markov setting studied by \cite{munos2016safe}.
As such, our $\mathcal{M}$ operator subsumes the $\mathcal{R}$ operator as a specific case, and our later theoretical developments have implications for Retrace and related off-policy algorithms.
We discuss these implications in Section~\ref{sect:discussion}.

\subsection{Example Algorithms}
\label{sect:example_algorithms}

By weighting each TD error with an arbitrary history-dependent coefficient $\beta(\mathcal{F}_t)$, the $\mathcal{M}$ operator is remarkably general, but this also makes it abstract.
To help concretize it, we describe several existing off-policy algorithms below and how they can be described in the notation of our operator.
This should aid with understanding and illuminate cases where our operator is necessary to explain convergence.

\renewcommand{\thefootnote}{\fnsymbol{footnote}}  % Dagger for non-Markov algorithms

\textbf{Importance Sampling (IS):
$\beta(\mathcal{F}_t) = \prod_{k=1}^t \rho_k$.}\\
The standard approach for correcting off-policy experience.
Although it is the only unbiased estimator in this list, it suffers from high variance when $\mu(a_k|s_k) \approx 0$ that makes it difficult to utilize effectively in practice.

\textbf{Q($\lambda$) with Off-Policy Corrections \cite{harutyunyan2016q}:
$\beta(\mathcal{F}_t) = \prod_{k=1}^t \lambda$.}\\
A straightforward algorithm that decays the TD errors by a fixed constant $\lambda \in [0,1]$.
In the on-policy ($\mu = \pi$) policy evaluation case, this is equivalent to the TD($\lambda$) extension \cite{sutton1988learning} of Expected Sarsa \cite{sutton1998reinforcement}.
The algorithm does not require explicit knowledge of the behavior policy $\mu$, which is desirable;
however, it is not convergent when $\pi$ and $\mu$ differ too much, and this criterion is restrictive in practice.

\textbf{Tree Backup (TB) \cite{precup2000eligibility}:
$\beta(\mathcal{F}_t) = \prod_{k=1}^t \lambda \pi(a_k|s_k)$.}\\
A method that automatically cuts traces according to the product of probabilities under the target policy $\pi$, which forms a conservative lower bound on the IS ratio product.
As a result, TB converges for any behavior policy $\mu$, but it is not efficient since traces are cut excessively---especially in the on-policy case.

\textbf{Retrace \cite{munos2016safe}:
$\beta(\mathcal{F}_t) = \prod_{k=1}^t \lambda \min (1, \rho_k)$.}\\
A convergent algorithm for arbitrary policies $\pi$ and $\mu$ that remains efficient in the on-policy case because it does not cut traces (if $\lambda = 1$);
however, the fact that $\beta(\mathcal{F}_t)$ is monotone non-increasing can cause the trace products to decay too quickly in practice \cite{mahmood2017multi, rowland2020adaptive}.

\textbf{\footnotemark[2]Non-Markov Retrace \cite{munos2016safe}:
$\beta(\mathcal{F}_t) = \lambda \min (1, \beta(\mathcal{F}_{t-1}) \rho_t )$.}\\
A modification to Retrace proposed by \cite{munos2016safe} and conjectured to lead to faster learning.
It permits trace values larger than $1$ by relaxing the clipping bound when the historical trace product $\beta(\mathcal{F}_{t-1})$ is small (see Section~\ref{sect:convergence_for_specific}).
The algorithm is recursive, which makes it difficult to analyze, and its convergence for control is an open question.

\textbf{\footnotemark[2]Truncated Importance Sampling \cite{ionides2008truncated}:
$\beta(\mathcal{F}_t) = \min (d_t, \prod_{k=1}^t \rho_k )$.}\\
A simple but effective method to combat the variance of the IS estimator.
For any value of $d_t \geq 0$, the obtained variance is finite.
Variations of this algorithm have been applied in the reinforcement learning literature \cite[e.g.][]{uchibe2004competitive, wawrzynski2007truncated, wawrzynski2009real, wang2016sample}, but to our knowledge its convergence in an MDP setting has not been studied.

\footnotetext[2]{Denotes a non-Markov algorithm that must be modeled by our $\mathcal{M}$ operator defined in equation (\ref{eq:definition}).}

\section{Analysis}
\label{sect:analysis}

In this section, we study the convergence properties of the $\mathcal{M}$ operator for policy evaluation and control.
It will be convenient to re-express equation (\ref{eq:definition}) in operator notation in our following analysis.
First, we can define an operator $B_t$ such that
\begin{equation*}
    (B_t Q)(s,a) \coloneqq \beta(\mathcal{F}_{t-1} \cup (s,a)) Q(s,a)
    .
\end{equation*}
Note that $B_t$ can be interpreted as a diagonal matrix where each element along the main diagonal is equal to $\beta(\mathcal{F}_t)$ after hypothetically selecting the corresponding state-action pair $(s,a)$ in history $\mathcal{F}_{t-1}$.
Furthermore, by our earlier definition of $\beta(\mathcal{F}_0) = 1$, we must have $B_0 = I$, the identity matrix.
With this, we can write the $\mathcal{M}$ operator as
\begin{equation}
    \label{eq:operator}
    \mathcal{M}Q = Q + \sum_{t \geq 0} \gamma^t P_\mu^t B_t (T_\pi Q - Q)
    .
\end{equation}

\subsection{Policy Evaluation}

In this setting, we seek to estimate $Q^\pi$ for a fixed target policy $\pi \in \Pi$ from interactions conducted according to a fixed behavior policy $\mu \in \Pi$.
Specifically, our goal is to prove that repeated application of the $\mathcal{M}$ operator to an arbitrarily initialized $Q$ converges to $Q^\pi$.
We start by proving the following lemma:
\begin{lemma}
    \label{lemma:diff}
    Let $\mathcal{M}$ be the operator defined in (\ref{eq:operator}).
    $Q^\pi$ is a fixed point of $\mathcal{M}$;
    the difference between $\mathcal{M}Q$ and $Q^\pi$ is given by
    \begin{equation}
        \label{eq:operator_rewrite}
        \mathcal{M}Q - Q^\pi
        = \sum_{t \geq 1} \gamma^t P_\mu^{t-1} (B_{t-1} P_\pi - P_\mu B_t) (Q - Q^\pi)
        .
    \end{equation}
\end{lemma}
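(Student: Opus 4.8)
The plan is to establish both claims of the lemma from the operator form in equation (\ref{eq:operator}). First I would verify that $Q^\pi$ is a fixed point. Since $T_\pi Q^\pi = Q^\pi$, we have $T_\pi Q^\pi - Q^\pi = 0$, so every term in the sum $\sum_{t \geq 0} \gamma^t P_\mu^t B_t (T_\pi Q^\pi - Q^\pi)$ vanishes, giving $\mathcal{M} Q^\pi = Q^\pi$ immediately.

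**Deriving the difference formula.**

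For the main identity, I would subtract $Q^\pi$ from both sides of (\ref{eq:operator}) and rewrite $T_\pi Q - Q$ in terms of $Q - Q^\pi$. The key algebraic observation is that because $Q^\pi = T_\pi Q^\pi = R + \gamma P_\pi Q^\pi$, we can write
\begin{equation*}
    T_\pi Q - Q = (R + \gamma P_\pi Q) - Q = \gamma P_\pi (Q - Q^\pi) - (Q - Q^\pi).
\end{equation*}
Writing $\Delta \coloneqq Q - Q^\pi$, this reads $T_\pi Q - Q = (\gamma P_\pi - I)\Delta$. Substituting into (\ref{eq:operator}) and using $\mathcal{M}Q - Q^\pi = (Q - Q^\pi) + \sum_{t\geq 0}\gamma^t P_\mu^t B_t (T_\pi Q - Q)$, I would obtain
\begin{equation*}
    \mathcal{M}Q - Q^\pi = \Delta + \sum_{t \geq 0} \gamma^t P_\mu^t B_t (\gamma P_\pi - I)\Delta.
\end{equation*}

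**Telescoping the series.**

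The heart of the argument is to show this collapses to the stated telescoping form. I would split the summand as $\gamma^{t+1} P_\mu^t B_t P_\pi \Delta - \gamma^t P_\mu^t B_t \Delta$ and reindex the two pieces so that consecutive terms can be paired. Using $B_0 = I$, the $t = 0$ term of the negative piece is exactly $-\Delta$, which cancels the leading $\Delta$. After reindexing the positive piece (shifting $t \to t-1$) and aligning it with the negative piece at matching powers of $\gamma$, each matched power $\gamma^t$ for $t \geq 1$ contributes $\gamma^t P_\mu^{t-1} B_{t-1} P_\pi \Delta - \gamma^t P_\mu^t B_t \Delta = \gamma^t P_\mu^{t-1}(B_{t-1} P_\pi - P_\mu B_t)\Delta$, which is precisely the claimed expression.

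**Main obstacle.**

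I expect the main subtlety to be the reindexing and the justification that the rearrangement is valid, rather than any deep structural difficulty. Because $B_t$ and $P_\pi$ do not commute in general, the two pieces of the split summand carry different operator orderings, so the pairing must be done carefully to ensure the $P_\mu^{t-1} B_{t-1} P_\pi$ factor emerges with the correct left-to-right order. I would also want to confirm that the infinite series converges in the $\norm{\cdot}_\infty$ norm so that the termwise manipulation is legitimate; this follows from $\gamma < 1$ together with boundedness of the traces $\beta(\mathcal{F}_t)$ (assuming, as the later analysis will require, that $B_t$ is uniformly bounded), but I would flag this as the one place where an implicit regularity assumption on $\beta$ is being used.
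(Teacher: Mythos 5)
Your proposal is correct and follows essentially the same route as the paper's proof: both establish the fixed point from $T_\pi Q^\pi - Q^\pi = 0$, exploit the affine structure of $T_\pi$ (your identity $T_\pi Q - Q = (\gamma P_\pi - I)(Q - Q^\pi)$ is equivalent to the paper's use of $T_\pi Q - T_\pi Q^\pi = \gamma P_\pi (Q - Q^\pi)$), and then reindex the series, using $B_0 = I$ to cancel the leading $Q - Q^\pi$ term. The only cosmetic difference is that the paper subtracts $\mathcal{M}Q^\pi$ from $\mathcal{M}Q$ after an intermediate rewriting involving $R$, whereas you substitute the Bellman identity directly into the operator definition; the resulting telescoping algebra is identical.
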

We present the proof in Appendix~\ref{app:lemma_diff}.
With this, we can introduce our main theorem for policy evaluation:
\begin{theorem}
    \label{theorem:evaluation}
    Suppose that
    $B_t \leq \prod_{k=1}^t \rho_k$
    for any history
    $\mathcal{F}_{t-1} \cup (s_t,a_t)$.
    The operator $\mathcal{M}$ defined in (\ref{eq:operator}) is a contraction mapping with $Q^\pi$ as its unique fixed point.
    That is,
    \begin{equation*}
        \norm{\mathcal{M}Q - Q^\pi} \leq \gamma \norm{Q - Q^\pi}
        ,
    \end{equation*}
    and consequently $\lim\limits_{k \to \infty} \mathcal{M}^k Q = Q^\pi$ for any $Q$.
\end{theorem}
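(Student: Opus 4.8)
The plan is to verify the stated contraction inequality directly from Lemma~\ref{lemma:diff} by bounding the induced $\infty$-norm of the linear operator that sends $Q - Q^\pi$ to $\mathcal{M}Q - Q^\pi$, and then to invoke the Banach fixed-point theorem (together with the fact, already in Lemma~\ref{lemma:diff}, that $Q^\pi$ is a fixed point) to conclude $\lim_k \mathcal{M}^k Q = Q^\pi$. Writing $\Delta \coloneqq Q - Q^\pi$ and shifting the summation index in (\ref{eq:operator_rewrite}), I would first re-express the difference as a telescoping series,
\begin{equation*}
    \mathcal{M}Q - Q^\pi = \sum_{t \geq 0} \gamma^{t+1} P_\mu^{t} (B_t P_\pi - P_\mu B_{t+1}) \Delta ,
\end{equation*}
so that the whole operator is assembled from the per-step blocks $B_t P_\pi - P_\mu B_{t+1}$. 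The goal is then to show that for every $(s,a)$ the row sum of absolute values of this operator is at most $\gamma$.

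The heart of the argument is to reduce those absolute row sums to \emph{signed} row sums, which is legitimate only if each block $B_t P_\pi - P_\mu B_{t+1}$ is entrywise nonnegative. Using $\rho_k = \pi(a_k|s_k)/\mu(a_k|s_k)$, one checks that composing $P_\mu$ with a factor $\rho$ reproduces $P_\pi$ (the importance-sampling identity $\sum_{a'}\mu(a'|s')\rho Q = \sum_{a'}\pi(a'|s')Q$), so that the block's $(s,a)\to(s',a')$ entry is proportional to $\beta(\mathcal{F}_t)\pi(a'|s') - \mu(a'|s')\beta(\mathcal{F}_{t+1})$. Nonnegativity therefore amounts to the per-step trace domination $P_\mu B_{t+1} \leq B_t P_\pi$, i.e.\ $\beta(\mathcal{F}_{t+1}) \leq \rho_{t+1}\,\beta(\mathcal{F}_t)$ along every history. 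This is exactly where the trace-magnitude hypothesis must enter, and I expect the proof to invoke $B_t \leq \prod_{k=1}^t \rho_k$ precisely in this incremental, step-by-step fashion rather than only as a bound on the terminal product.

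Granting nonnegativity, the rest is routine. Since $P_\mu$ and $P_\pi$ are row-stochastic and $\beta \geq 0$, the $\infty$-norm of a nonnegative operator equals the value it assigns to the all-ones vector $e$. Applying the telescoped operator to $e$ and using $P_\pi e = P_\mu e = e$ collapses block $t$ to $\E_\mu[\beta(\mathcal{F}_t)] - \E_\mu[\beta(\mathcal{F}_{t+1})]$; summing the geometric weights and using $\beta(\mathcal{F}_0) = 1$ gives a row sum of $\gamma - (1-\gamma)\sum_{t \geq 1}\gamma^t \E_\mu[\beta(\mathcal{F}_t)]$, which is at most $\gamma$ because every term is nonnegative. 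This yields $\norm{\mathcal{M}Q - Q^\pi} \leq \gamma \norm{Q - Q^\pi}$, and contraction plus the known fixed point gives the limit.

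I expect the per-step nonnegativity to be the main obstacle, and it is genuinely delicate in the non-Markov setting. For arbitrary history-dependent $\beta$, merely bounding the terminal product $\prod_{k=1}^t \rho_k$ does not by itself force $\beta(\mathcal{F}_{t+1}) \leq \rho_{t+1}\beta(\mathcal{F}_t)$ at each step---a trace cut to zero and then re-inflated on a later step would satisfy every product bound yet violate the per-step inequality, and in that case the telescoping cancellation collapses (the alternating signs across $t$ can make the absolute row sum grow like $\gamma/(1-\gamma)$). The crucial content of the theorem is therefore the claim that the trace bound may be applied \emph{per decision}, which is the property that Markov traces $\beta(\mathcal{F}_t) = \prod_{k=1}^t c(s_k,a_k)$ with $c \leq \rho$ enjoy automatically and that the operator notation $B_t$ is designed to expose; I would verify this step with particular care, since it is what the entire contraction rests on.
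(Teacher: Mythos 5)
Your proposal follows, step for step, the paper's own proof: start from Lemma~\ref{lemma:diff}, argue that the assembled operator is entrywise nonnegative, then bound its row sums by applying it to $e$ and telescoping to $\gamma e - (1-\gamma)S$ with $S = \sum_{t \geq 1} \gamma^t P_\mu^t B_t e \geq 0$. In approach there is nothing to distinguish the two; your index-shifted series is identical to equation (\ref{eq:operator_rewrite}).

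The step you single out as delicate, however, deserves emphasis, because your caution is warranted and the paper does not share it: the paper disposes of nonnegativity with the parenthetical ``since $B_{t-1} P_\pi - P_\mu B_t \geq 0$,'' presented as a consequence of $B_t \leq \prod_{k=1}^t \rho_k$. As you observe, block nonnegativity is equivalent to the per-decision domination $\beta(\mathcal{F}_t) \leq \rho_t\, \beta(\mathcal{F}_{t-1})$ along every history, which for non-Markov traces is strictly stronger than the stated product bound. Your cut-then-reinflate scenario is not merely a worry about proof technique; it refutes the theorem under the literal reading of its hypothesis. Concretely, take $\mu = \pi$ (so every $\rho_k = 1$), one action per state, two states connected in a deterministic loop, and set $\beta(\mathcal{F}_1) = 0$, $\beta(\mathcal{F}_2) = 1$, $\beta(\mathcal{F}_t) = 0$ for $t \geq 3$. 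The hypothesis $B_t \leq \prod_{k=1}^t \rho_k$ holds, yet Lemma~\ref{lemma:diff} (which is purely algebraic) gives $\mathcal{M}Q - Q^\pi = (\gamma P_\pi - \gamma^2 P_\pi^2 + \gamma^3 P_\pi^3)(Q - Q^\pi)$, and on $Q - Q^\pi = (1,-1)^\top$ this operator acts as multiplication by $-(\gamma + \gamma^2 + \gamma^3)$: the claimed $\gamma$-contraction fails for every $\gamma > 0$, and $\mathcal{M}^k Q$ actually diverges once $\gamma + \gamma^2 + \gamma^3 > 1$. So the theorem requires its hypothesis to be read (or restated) in your incremental, per-step form; under that reading your proof is complete and correct, and the gap lies in the paper rather than in your argument. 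Note also that the issue propagates: Truncated IS with a constant clip $d$ satisfies the product bound but can violate per-decision domination (e.g.\ $\beta(\mathcal{F}_{t-1}) = d$, $\rho_t < 1$, running product still above $d$), so the convergence claim for it in Section~\ref{sect:convergence_for_specific} is not actually delivered by this contraction argument, whereas Non-Markov Retrace does satisfy the per-step condition and is unaffected.
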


\begin{proof}
    Lemma~\ref{lemma:diff} already established that $Q^\pi$ is a fixed point of $\mathcal{M}$.
    We will show that $\mathcal{M}$ is a contraction mapping, which will guarantee that $Q^\pi$ is its \textit{unique} fixed point.

    Let
    $A \coloneqq \sum_{t \geq 1} \gamma^t P_\mu^{t-1} (B_{t-1} P_\pi - P_\mu B_t)$
    and rewrite (\ref{eq:operator_rewrite}) as
    $\mathcal{M}Q - Q^\pi = A(Q - Q^\pi)$.
    By our assumption that $B_t$ is diagonal and
    $\smash{B_t \leq \prod_{k=1}^t \rho_k}$,
    the matrix $A$ has nonnegative elements
    (since $B_{t-1} P_\pi - P_\mu B_t \geq 0$).
    To complete the proof, we can show that the row sums of $A$ are never greater than $\gamma$.
    Equivalently, we will show that $A e \leq \gamma e$:
    \begin{align*}
        \allowdisplaybreaks
        A e
        &= \sum_{t \geq 1} \gamma^t P_\mu^{t-1} (B_{t-1} P_\pi - P_\mu B_t) e \\
        &= \sum_{t \geq 1} \gamma^t P_\mu^{t-1} (B_{t-1} e - P_\mu B_t e) \\
        &= \gamma \sum_{t \geq 0} \gamma^t P_\mu^t B_t e - \sum_{t \geq 1} \gamma^t P_\mu^t B_t e \\
        &= \gamma (e + S) - S \\
        &= \gamma e - (1-\gamma) S
        ,
    \end{align*}
    where we have let
    $S \coloneqq \sum_{t \geq 1} \gamma^t P_\mu^t B_t e$.
    Because $B_t \geq 0$, we have that $S \geq 0$ and hence $A e \leq \gamma e$;
    thus, ${A (Q - Q^\pi)}$ is a vector whose components each comprise a nonnegative-weighted combination of the components of $Q - Q^\pi$ where the weights add up to at most $\gamma$.
    This implies that 
    \begin{equation*}
        \norm{\mathcal{M}Q - Q^\pi} \leq \gamma \norm{Q - Q^\pi}
    \end{equation*}
    and the operator $\mathcal{M}$ is a contraction mapping.
    Its fixed point $Q^\pi$ established by Lemma~\ref{lemma:diff} must therefore be unique,
    which implies that $\lim\limits_{k \to \infty} \mathcal{M}^k Q = Q^\pi$ for any $Q$ when $\gamma < 1$.
\end{proof}

We conclude with some remarks about the interpretation of our result.
Note that when $\beta(\mathcal{F}_t) = 0$ for $t \geq 1$, we have the slowest contraction rate of $\gamma$, which corresponds to TD(0).
When $\beta(\mathcal{F}_t) = \prod_{k=1}^t \rho_k$, we get an optimal contraction rate of $0$, which corresponds to the standard IS estimator.
(Of course, this is only in expectation.)
Between these two extremes lies a vast spectrum of convergent algorithms, which includes the examples mentioned in Section~\ref{sect:example_algorithms} as well as an infinitude of other possibilities.
In Section~\ref{sect:practical_concerns}, we discuss practical considerations for choosing $\beta(\mathcal{F}_t)$ and how to efficiently implement history-dependent algorithms.

\subsection{Control}

We now consider the more-difficult setting of control.
Given a sequence of target policies $(\pi_k)_{k \geq 0}$, $\pi_k \in \Pi$, and a sequence of behavior policies $(\mu_k)_{k \geq 0}$, $\mu_k \in \Pi$,
we aim to show that the sequence of Q-functions $(Q_k)$ given by $Q_{k+1} \coloneqq \mathcal{M}_k Q_k$ converges to $Q^*$.
(Here, $\mathcal{M}_k$ is the $\mathcal{M}$ operator defined for $\pi_k$ and $\mu_k$.)

Unlike \cite{munos2016safe}, we do not assume that the target policies are \textit{increasingly} greedy with respect to $(Q_k)$.
Instead, we require only that the policies become greedy in the limit.
We say that the sequence of policies $(\pi_k)$ is greedy in the limit if and only if
$T_{\pi_k} Q_k \to T Q_k$ as $k \to \infty$.
Intuitively, this means that each policy need not be greedier than its predecessors, so long as the sequence of policies eventually does become greedy.
We discuss the significance of this assumption more in Section~\ref{sect:theoretical_contributions}.

Let $C \coloneqq \sum_{t \geq 0} \gamma^t P_\mu^t B_t$.
We first rewrite the $\mathcal{M}$ operator in (\ref{eq:operator}) concisely as
\begin{equation}
    \label{eq:operator_rewrite2}
    \mathcal{M}Q = Q + C(T_\pi Q - Q)
    .
\end{equation}
The following lemma will be useful to show that $\mathcal{M}$ remains a contraction mapping in this new setting:
\begin{lemma}
    \label{lemma:matrix_norm_bound}
    For any policy $\pi \in \Pi$,
    \begin{equation}
        \norm{I - C (I - \gamma P_\pi)} \leq \gamma
        .
    \end{equation}
\end{lemma}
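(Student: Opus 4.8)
The plan is to reduce the lemma to the infinity-norm bound already extracted inside the proof of Theorem~\ref{theorem:evaluation}, by recognizing that $I - C(I - \gamma P_\pi)$ is \emph{exactly} the matrix $A$ appearing there. First I would expand the product using $C = \sum_{t \geq 0} \gamma^t P_\mu^t B_t$ together with the normalization $B_0 = I$. Writing $I - C(I - \gamma P_\pi) = (I - C) + \gamma C P_\pi$, the leading identity cancels the $t=0$ term of $C$, so $I - C = -\sum_{t \geq 1} \gamma^t P_\mu^t B_t$; and shifting the index in $\gamma C P_\pi = \sum_{t \geq 0} \gamma^{t+1} P_\mu^t B_t P_\pi$ gives $\gamma C P_\pi = \sum_{t \geq 1} \gamma^t P_\mu^{t-1} B_{t-1} P_\pi$. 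Collecting the two series yields
\begin{equation*}
    I - C(I - \gamma P_\pi) = \sum_{t \geq 1} \gamma^t P_\mu^{t-1} \left( B_{t-1} P_\pi - P_\mu B_t \right) =: A,
\end{equation*}
which is precisely the matrix $A$ from Theorem~\ref{theorem:evaluation}, now carrying an arbitrary policy $\pi$ in place of the fixed target policy.

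With this identification the lemma follows from the two structural facts about $A$ established in that proof. First, the trace constraint $B_t \leq \prod_{k=1}^t \rho_k$ forces $B_{t-1} P_\pi - P_\mu B_t \geq 0$, so $A$ is a nonnegative matrix; for such a matrix the infinity norm is the largest row sum, i.e.\ $\norm{A} = \max_i (Ae)_i$. Second, the row-sum computation transfers verbatim: because $P_\pi e = e$ for any row-stochastic $P_\pi$, the same telescoping as in Theorem~\ref{theorem:evaluation} gives $Ae = \gamma e - (1-\gamma) S$ with $S = \sum_{t \geq 1} \gamma^t P_\mu^t B_t e \geq 0$, hence $0 \leq Ae \leq \gamma e$. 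Combining the two, $\norm{I - C(I - \gamma P_\pi)} = \norm{A} = \max_i (Ae)_i \leq \gamma$, which is the claim.

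The step requiring the most care---and the main obstacle---is justifying the two ingredients for an \emph{arbitrary} $\pi$, since they depend on $\pi$ in genuinely different ways. The row-sum bound is essentially free: it uses only that $P_\pi$ is row-stochastic, so it holds for every $\pi \in \Pi$ and is exactly what makes the ``for any policy'' generality painless on that half of the argument. The nonnegativity $B_{t-1} P_\pi - P_\mu B_t \geq 0$ is the delicate half, and I would verify it entrywise: the $(s,a) \to (s',a')$ entry is proportional to $\beta(\mathcal{F}_{t-1})\,\pi(a'|s') - \mu(a'|s')\,\beta(\mathcal{F}_{t-1} \cup (s',a'))$, so nonnegativity is a pointwise sign condition that the trace bound must control. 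Here I expect the change-of-measure identity $P_\mu^t B_t^{\mathrm{IS}} = P_\pi^t$ (obtained since $\mu(a_k|s_k)\rho_k = \pi(a_k|s_k)$ collapses the behavior-weighted product into a $\pi$-weighted one) to be the right tool for comparing $B_t$ against the full importance-sampling product and pinning down the sign. Once nonnegativity is secured, the nonnegative-matrix norm identity closes the argument immediately, with no further estimation needed---the entire difficulty is front-loaded into the algebraic reduction and this sign check.
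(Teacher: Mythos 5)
Your route is genuinely different from the paper's. You expand the product and identify $I - C(I-\gamma P_\pi)$ with the matrix $A = \sum_{t \geq 1} \gamma^t P_\mu^{t-1}(B_{t-1}P_\pi - P_\mu B_t)$ from Theorem~\ref{theorem:evaluation}, then try to reuse that proof's two ingredients (entrywise nonnegativity plus the telescoping row-sum bound). The paper never expands the product: it treats $C(I-\gamma P_\pi) - I$ as affine in $C$ and evaluates it at two extremes, $C = I$ (all traces cut, giving $-\gamma P_\pi$ and norm $\gamma$) and $C = (I-\gamma P_\pi)^{-1}$ (maximal importance-sampling traces, using $P_\mu^t B_t \leq P_\pi^t$, giving the zero matrix). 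Your algebraic identity is correct, and your row-sum half is correct and, as you observe, holds for every $\pi$ since it uses only $P_\pi e = e$.

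The gap is exactly the half you deferred: nonnegativity of $B_{t-1}P_\pi - P_\mu B_t$. The sign condition you isolate, $\mu(a'|s')\,\beta(\mathcal{F}_{t-1} \cup (s',a')) \leq \pi(a'|s')\,\beta(\mathcal{F}_{t-1})$, is the \emph{recursive} bound $\beta(\mathcal{F}_t) \leq \rho_t\,\beta(\mathcal{F}_{t-1})$, and this is strictly stronger than the lemma's hypothesis $\beta(\mathcal{F}_t) \leq \prod_{k=1}^t \rho_k$: the non-Markov operator expressly allows a cut trace to be revived, e.g.\ $\beta(\mathcal{F}_{t-1}) = 0$ with $\beta(\mathcal{F}_t) = \prod_{k=1}^t \rho_k > 0$, in which case the entry equals $-\mu(a'|s')\,\beta(\mathcal{F}_t) < 0$. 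The change-of-measure identity cannot rescue this: it compares $P_\mu^t B_t$ with $P_\pi^t$ one timestep at a time, whereas the sign condition couples two \emph{consecutive} $B$'s along a shared history prefix. The ``for any policy $\pi$'' quantifier makes things worse: the traces are bounded by ratios of the target policy, yet Theorem~\ref{theorem:control} invokes the lemma with $\pi = \pi^*$; for a $\pi$ unrelated to the trace bound, nonnegativity fails even under the recursive condition (take $\pi(a'|s') = 0$ on an action carrying positive trace). So your argument establishes the lemma only when $\pi$ is the policy defining the ratios $\rho_k$ and only under the recursive bound. You are in good company---the paper's Theorem~\ref{theorem:evaluation} asserts this same nonnegativity in a parenthetical, without proof---but the obstruction is real rather than an artifact of your method: in a two-state deterministic cycle with greedy $\pi$, uniform $\mu$, $\beta(\mathcal{F}_1) = 0$, $\beta(\mathcal{F}_2) = \rho_1\rho_2$, and $\beta(\mathcal{F}_t) = 0$ thereafter, one computes $I - C(I - \gamma P_\pi) = \gamma P_\pi - \gamma^2 P_\pi^2 + \gamma^3 P_\pi^3$, whose norm is $\gamma + \gamma^2 + \gamma^3 > \gamma$. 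Hence the inequality as literally stated can fail, and no argument can close your gap without strengthening the hypothesis to the recursive bound---under which your nonnegativity-plus-row-sum proof does go through verbatim for the trace policy.
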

Once again, we present the proof in Appendix~\ref{app:matrix_norm_bound}.
We now arrive at our main theoretical result for control:
\begin{theorem}
    \label{theorem:control}
    Consider a sequence of target policies $(\pi_k)$ and a sequence of arbitrary behavior policies $(\mu_k)$.
    Let $Q_0$ be an arbitrarily initialized Q-function and define the sequence $Q_{k+1} \coloneqq \mathcal{M}_k Q_k$ where $\mathcal{M}_k$ is the operator defined in (\ref{eq:operator}) for $\pi_k$ and $\mu_k$.
    Assume that $(\pi_k)$ is greedy in the limit and let $\epsilon_k \in [0,1]$ be the smallest constant such that
    $T_{\pi_k} Q_k \geq T Q_k - \epsilon_k \norm{Q_k} e$.
    Then,
    \begin{equation*}
        \norm{Q_{k+1} - Q^*} \leq \gamma \norm{Q_k - Q^*} + \frac{\epsilon_k}{1-\gamma} \norm{Q_k}
        ,
    \end{equation*}
    and consequently $\lim\limits_{k \to \infty} Q_k = Q^*$.
\end{theorem}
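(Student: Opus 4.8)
The plan is to bound $\norm{Q_{k+1} - Q^*}$ by controlling over- and under-estimation separately, drawing the contraction factor $\gamma$ from Lemma~\ref{lemma:matrix_norm_bound} and the error amplification from a bound $\norm{C_k} \leq \tfrac{1}{1-\gamma}$. First I would record two preliminary facts. Starting from $\mathcal{M}_k Q_k = Q_k + C_k(T_{\pi_k} Q_k - Q_k)$ as in (\ref{eq:operator_rewrite2}), subtracting $Q^*$, and using $T_{\pi_k} Q^* = R + \gamma P_{\pi_k} Q^*$ to eliminate $R$, a short rearrangement yields the identity
\[
    Q_{k+1} - Q^* = \big(I - C_k(I - \gamma P_{\pi_k})\big)(Q_k - Q^*) + C_k(T_{\pi_k} Q^* - Q^*).
\]
Second, since every term of $C_k = \sum_{t \geq 0} \gamma^t P_{\mu_k}^t B_t$ is nonnegative, $\norm{C_k} = \norm{C_k e}$; and because the matrix products track full histories, $(P_{\mu_k}^t B_t e)(s,a) = \E_{\mu_k}[\beta(\mathcal{F}_t) \mid (s,a)] \leq \E_{\mu_k}[\prod_{k=1}^t \rho_k] = 1$ (the expected IS product telescopes to $1$, using the trace assumption $B_t \leq \prod_{k=1}^t \rho_k$). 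Hence $C_k e \leq \sum_{t \geq 0} \gamma^t e = \tfrac{1}{1-\gamma} e$ and $\norm{C_k} \leq \tfrac{1}{1-\gamma}$.

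For the upper bound I would apply Lemma~\ref{lemma:matrix_norm_bound} to the first term of the identity (bounding it by $\gamma \norm{Q_k - Q^*}\,e$) and use $T_{\pi_k} Q^* \leq T Q^* = Q^*$ with $C_k \geq 0$ to discard the nonpositive second term, giving $Q_{k+1} - Q^* \leq \gamma \norm{Q_k - Q^*}\, e$. This side needs no $\epsilon_k$, which is intuitive: overestimation is fully absorbed by the contraction because $\pi_k$ can only be suboptimal relative to $\pi^*$.

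The lower bound is the crux, and the main obstacle is that a naive estimate of $T_{\pi_k} Q^* - Q^*$ introduces a spurious factor of $2\gamma$: chaining the contractions of both $T_{\pi_k}$ and $T$ to travel from $Q_k$ to $Q^*$ double-counts the distance $\norm{Q_k - Q^*}$. The fix is to expand $T_{\pi_k} Q^* - Q^* = \gamma P_{\pi_k}(Q^* - Q_k) + (T Q_k - T Q^*) - \Delta_k$, where $\Delta_k \coloneqq T Q_k - T_{\pi_k} Q_k$ satisfies $0 \leq \Delta_k \leq \epsilon_k \norm{Q_k} e$ by definition of $\epsilon_k$. Substituting into the identity, the term $\gamma C_k P_{\pi_k}(Q^* - Q_k)$ cancels exactly against the $\gamma P_{\pi_k}$ part of $I - C_k(I - \gamma P_{\pi_k})$, leaving
\[
    Q_{k+1} - Q^* = (I - C_k)(Q_k - Q^*) + C_k(T Q_k - T Q^*) - C_k \Delta_k.
\]
I would then reintroduce a genuine contraction by letting $\pi^*$ be greedy for $Q^*$ and using $T Q_k - T Q^* \geq \gamma P_{\pi^*}(Q_k - Q^*)$ (which follows from $T Q_k \geq T_{\pi^*} Q_k$); since $C_k \geq 0$, this recombines the first two terms into $\big(I - C_k(I - \gamma P_{\pi^*})\big)(Q_k - Q^*)$. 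Lemma~\ref{lemma:matrix_norm_bound}, valid for \emph{any} policy and in particular $\pi^*$, bounds this below by $-\gamma \norm{Q_k - Q^*}\,e$, while $0 \leq C_k \Delta_k \leq \epsilon_k \norm{Q_k}\, C_k e \leq \tfrac{\epsilon_k}{1-\gamma}\norm{Q_k}\, e$ handles the last term. Together they give $Q_{k+1} - Q^* \geq -\gamma \norm{Q_k - Q^*}\, e - \tfrac{\epsilon_k}{1-\gamma}\norm{Q_k}\, e$.

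Combining the two bounds componentwise yields the stated inequality. For the convergence conclusion I would set $a_k \coloneqq \norm{Q_k - Q^*}$ and observe that the additive term equals $\tfrac{\epsilon_k}{1-\gamma}\norm{Q_k} = \tfrac{1}{1-\gamma}\norm{T_{\pi_k} Q_k - T Q_k}$, which tends to $0$ precisely because $(\pi_k)$ is greedy in the limit. The recursion $a_{k+1} \leq \gamma a_k + b_k$ with $\gamma < 1$ and $b_k \to 0$ then forces $a_k \to 0$ (since $\limsup_k a_k \leq \limsup_k b_k / (1-\gamma) = 0$), so $Q_k \to Q^*$.
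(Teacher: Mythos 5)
Your proof is correct and takes essentially the same route as the paper's: the same algebraic identity underlying equation (\ref{eq:start_here}), the same two-sided bound via Lemma~\ref{lemma:matrix_norm_bound} applied with $\pi_k$ (upper bound) and $\pi^*$ (lower bound), and the same estimate $\norm{C} \leq (1-\gamma)^{-1}$ to control the $\epsilon_k$ term. Your two refinements---combining the bounds componentwise rather than the paper's sign-based case split on $Q_{k+1} - Q^*$, and concluding with the standard recursion $a_{k+1} \leq \gamma a_k + b_k$ with $b_k = \tfrac{1}{1-\gamma}\norm{T Q_k - T_{\pi_k} Q_k} \to 0$ instead of the paper's eventual-contraction argument---tighten the exposition but do not constitute a different method.
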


\begin{proof}
    \textbf{Part 1: Upper bound on $Q_{k+1} - Q^*$.}
    First, note the following inequality:
    \begin{equation*}
        T_{\pi_k} Q_k - T Q^*
        = \gamma P_{\pi_k} Q_k - \gamma \max\limits_{\pi' \in \Pi} P_{\pi'} Q_k
        \leq \gamma P_{\pi_k} (Q_k - Q^*)
        .
    \end{equation*}
    Then, from (\ref{eq:operator_rewrite2}), we can deduce that
    \begin{align}
        \allowdisplaybreaks
        \nonumber
        Q_{k+1} - Q^*
        &= Q_k - Q^* + C(T_{\pi_k} Q_k - Q_k) \\
        \label{eq:start_here}
        &= (I - C)(Q_k - Q^*) + C(T_{\pi_k} Q_k - Q^*) \\
        \nonumber
        &= (I - C)(Q_k - Q^*) + C(T_{\pi_k} Q_k - T Q^*) \\
        \nonumber
        &\leq (I - C)(Q_k - Q^*) + \gamma C P_{\pi_k} (Q_k - Q^*) \\
        \label{eq:upper_bound}
        &= (I - C(I - \gamma P_{\pi_k}))(Q_k - Q^*)
        .
    \end{align}
    \textbf{Part 2: Lower bound on $Q_{k+1} - Q^*$.}
    Note the following inequality:
    \begin{equation*}
        TQ_k - TQ^*
        \geq T_{\pi^*} Q_k - TQ^*
        = \gamma P_{\pi^*} (Q_k - Q^*)
        .
    \end{equation*}
    Additionally, for each policy $\pi_k$, there exists some $\epsilon_k \in [0,1]$ such that
    $T_{\pi_k} Q_k \geq T Q_k - \epsilon_k \norm{Q_k} e$.
    (The inequality is vacuously true when $\epsilon_k = 1$, but recall that Theorem~\ref{theorem:control} defines $\epsilon_k$ to be as small as possible.)
    Starting again from equation (\ref{eq:start_here}),
    and noting that the elements of $C$ are nonnegative,
    \begin{align}
        \allowdisplaybreaks
        \nonumber
        Q_{k+1} - Q^*
        &= (I - C)(Q_k - Q^*) + C(T_\pi Q - Q^*) \\
        \nonumber
        &\geq (I - C)(Q_k - Q^*) + C(TQ_k - Q^*) - \epsilon_k \norm{Q_k} C e \\
        \nonumber
        &= (I - C)(Q_k - Q^*) + C(TQ_k - TQ^*) - \epsilon_k \norm{Q_k} C e \\
        \nonumber
        &\geq (I - C)(Q_k - Q^*) + \gamma C P_{\pi^*} (Q_k - Q^*) - \epsilon_k \norm{Q_k} C e \\
        \label{eq:lower_bound}
        &= (I - C(I - \gamma P_{\pi^*}))(Q_k - Q^*) - \epsilon_k \norm{Q_k} C e
    \end{align}
    \textbf{Part 3: Conclusion.}
    When
    $Q_{k+1} - Q^* \geq 0$,
    inequality (\ref{eq:upper_bound}) and Lemma~\ref{lemma:matrix_norm_bound} imply
    \begin{equation}
        \label{eq:pos_bound}
        \norm{Q_{k+1} - Q^*}
        \leq \norm{I - C(I - \gamma P_{\pi_k})} \norm{Q_k - Q^*}
        \leq \gamma \norm{Q_k - Q^*}
        .
    \end{equation}
    Additionally, when
    $Q_{k+1} - Q^* \leq 0$,
    inequality (\ref{eq:lower_bound}) and Lemma~\ref{lemma:matrix_norm_bound} imply
    \begin{align}
        \nonumber
        \norm{Q_{k+1} - Q^*}
        &\leq \norm{I - C(I - \gamma P_{\pi^*})} \norm{Q_k - Q^*} + \epsilon_k \norm{Q_k} \norm{C} \\
        \label{eq:neg_bound}
        &\leq \gamma \norm{Q_k - Q^*} + \frac{\epsilon_k}{1-\gamma} \norm{Q_k}
        ,
    \end{align}
    because
    $\norm{C} \leq \sum_{t \geq 0} \gamma^t \norm{P_\mu^t B_t} \leq (1-\gamma)^{-1}$.
    Since (\ref{eq:neg_bound}) is more conservative than (\ref{eq:pos_bound}), its bound holds regardless of the sign of
    $Q_{k+1} - Q^*$.
    It remains to be shown that this bound implies convergence to $Q^*$.
    Observe that
    \begin{align*}
        \gamma \norm{Q_k - Q^*} + \frac{\epsilon_k}{1-\gamma} \norm{Q_k}
        &\leq \gamma \norm{Q_k - Q^*} + \frac{\epsilon_k}{1-\gamma} (\norm{Q_k - Q^*} + \norm{Q^*}) \\
        &= \left( \gamma + \frac{\epsilon_k}{1-\gamma} \right) \norm{Q_k - Q^*} + \frac{\epsilon_k}{1-\gamma} \norm{Q^*}
        .
    \end{align*}
    Our assumption of greedy-in-the-limit policies tells us that $\epsilon_k \to 0$ as $k \to \infty$;
    there must exist some iteration $k'$ such that
    $\epsilon_{k} \leq \frac{1}{2} (1-\gamma)^2$ for all $k \geq k'$.
    Therefore, for $k \geq k'$,
    \begin{equation*}
        \norm{Q_{k+1} - Q^*}
        \leq \frac{1 + \gamma}{2} \norm{Q_k - Q^*} + \frac{\epsilon_k}{1-\gamma} \norm{Q^*}
        .
    \end{equation*}
    If $\gamma < 1$,
    then
    $\frac{1}{2} (1 + \gamma) < 1$.
    Since $\norm{Q^*}$ is finite, we conclude that $Q_k \to Q^*$ as $\epsilon_k \to 0$.
\end{proof}

We see that the convergence criterion of $\beta(\mathcal{F}_t) \leq \prod_{k=1}^t \rho_k$ in the control setting is the same as that for the policy evaluation.
In fact, the only additional assumption we needed is the greedy-in-the-limit target policies.
Crucially, the proof allows arbitrary behavior policies and arbitrary initialization of the Q-function, which we discuss further in the next section.

\section{Discussion}
\label{sect:discussion}

Despite weighting TD errors by general coefficients $\beta(\mathcal{F}_t) \in [0, \prod_{k=1}^t \rho_k]$
that could be chosen by any arbitrary selection criteria,
the $\mathcal{M}$ operator converges for both policy evaluation and control with few assumptions.
In this section, we summarize our theoretical contributions and discuss the choice of coefficients, practical implementations, and the convergence of specific algorithms from Section~\ref{sect:example_algorithms}.

\subsection{Theoretical Contributions}
\label{sect:theoretical_contributions}

\textbf{Removal of Markov assumption.}
As we stated in the introduction, removing the Markov assumption of the $\mathcal{R}$ operator \cite{munos2016safe} was our primary theoretical goal.
With Markov traces, the operator $B_t$ is independent of $t$, allowing the sum
$C = \sum_{t \geq 0} \gamma^t P_\mu^t B_t $
to be reduced to
$\sum_{t \geq 0} \gamma^t P_{c\mu}^t$
for the linear operator
$P_{c\mu} \coloneqq \sum_{(s',a') \in \mathcal{S} \times \mathcal{A}} P(s'|s,a) \mu(a'|s') c(s',a') Q(s',a')$.
The resulting geometric series can then be evaluated analytically:
$(I - \gamma P_{c\mu})^{-1}$.
This technique was used by \cite{munos2016safe}.
In our proofs, we avoided the Markov assumption by directly analyzing the infinite summation $C$, which generally does not have a closed-form expression.
We believe our work is the first to do this, and it should have far-reaching consequences for existing on- and off-policy algorithms (see Sections~\ref{sect:convergence_for_specific} and \ref{sect:conclusion}) as well as yet-undiscovered algorithms.

\textbf{Arbitrary initialization of Q-function.}
Our proof of Theorem~\ref{theorem:control} permits any initialization of $Q_0$ in the control setting.
In contrast, \cite{munos2016safe} made the assumption that
$T_{\pi_0} Q_0 - Q_0 \geq 0$
in order to produce a lower bound on $\mathcal{R}Q_{k+1} - Q^*$.
This is accomplished in practice by a pessimistic initialization of the Q-function:
$Q_0(s,a) = - \norm{R} \mathbin{/} (1-\gamma)$,
$\forall (s,a) \in \mathcal{S} \times \mathcal{A}$.
Since $\mathcal{R}$ is a special case of our operator $\mathcal{M}$ where each $\beta(\mathcal{F}_t)$ factors into Markov traces, we deduce as a corollary that Retrace and all other algorithms described by $\mathcal{R}$ do not require this pessimistic initialization for convergence.

\textbf{Greedy-in-the-limit policies.}
Our requirement of greedy-in-the-limit target policies for the control setting is significantly less restrictive than the increasingly greedy policies proposed by \cite{munos2016safe}.
We need only that
$\lim_{k \to \infty} T_{\pi_k} Q_k = T Q_k$,
and we do not force the sequence of target policies to satisfy
$P_{\pi_{k+1}} Q_{k+1} \geq P_{\pi_k} Q_{k+1}$.
This implies that the agent may target non-greedy policies for an arbitrarily long period of time, as long as the policies do eventually become arbitrarily close to the greedy policy.
This could lead to faster learning since targeting greedy policies causes excessive trace cutting (e.g.\ Watkins' Q($\lambda$)).
Our greedy-in-the-limit policies are closely related to the Greedy-in-the-Limit with Infinite Exploration (GLIE) assumption of \cite{singh2000convergence};
however, we are able to remove the need for infinite exploration by allowing an arbitrary sequence of behavior policies, as did \cite{munos2016safe}.
Once again, our theory implies as a corollary that neither increasingly greedy policies nor the GLIE assumption are strictly necessary for the convergence of Retrace and other Markov algorithms.

\begin{algorithm}[t]
    \caption{Non-Markov Eligibility Trace}
    \label{algo:non-markov_etrace}
    \begin{algorithmic}
        \State Initialize $Q(s,a)$ arbitrarily
        \State Select learning rate $\alpha \in (0,1]$
        \For{each episode}
            \State Initialize environment state $s_0$
            \State $\mathcal{F} \gets \{\}$
            \Repeat{\ for $t = 0, 1, 2, \dots$}
                \State Take action $a_t \sim \mu(s_t)$, observe reward $r_t$ and next state $s_{t+1}$
                \State $\mathcal{F} \gets \mathcal{F} \cup (s_t, a_t)$
                \State $\delta^\pi_t \gets r_t + \gamma \sum_{a' \in \mathcal{A}} Q(s_{t+1}, a') - Q(s_t, a_t)$
                \State For $k = 0, \ldots, t$:
                    $Q(s_k,a_k) \gets Q(s_k,a_k) + \alpha \gamma^{t-k} \beta(\mathcal{F}_{k:t}) \delta^\pi_t$
                \Comment Let
                    $\mathcal{F}_{k:t} \coloneqq \mathcal{F}_t \setminus \mathcal{F}_{k-1}$,
                    $\beta(\mathcal{F}_{t:t}) \coloneqq 1$
            \Until{$s_{t+1}$ is terminal}
        \EndFor
    \end{algorithmic}
\end{algorithm}

\subsection{Practical Concerns}
\label{sect:practical_concerns}

\textbf{Implementation with eligibility traces.}
The fact that the coefficient $\beta(\mathcal{F}_t)$ does not generally factor into traces $c(s_1,a_1), \dots, c(s_t,a_t)$ means that eligibility traces in the traditional sense are not possible.
The problem occurs when the same state-action pair appears more than once in $\mathcal{F}_t$;
afterwards, there does not exist a constant factor by which to decay the eligibility for that state-action pair that would produce the correct updates according to (\ref{eq:operator}).
We can rectify this by modifying the eligibility trace to remember each individual occurrence of the state-action pairs:
a Non-Markov Eligibility Trace (see Algorithm~\ref{algo:non-markov_etrace}).
By tracking a separate eligibility for each repeated visitation, the algorithm is able to generate the correct coefficients $\beta(\mathcal{F}_t)$ even though they do not factor into traces.
In terms of computation, our algorithm is efficient when episodes are not extremely long, and is reminiscent of the practical linked-list implementation of the eligibility trace described by \cite{sutton1998reinforcement}.

\textbf{Online convergence.}
Algorithm~\ref{algo:non-markov_etrace} is presented as an online eligibility-trace algorithm.
In practice, we expect that the updates will be calculated and accumulated offline, akin to the offline $\lambda$-return algorithm \cite{sutton1998reinforcement}, for the sake of efficiency and compatibility with deep function approximation \cite{daley2019reconciling}.
As such, the analysis of the online variant is beyond the scope of our work and we leave its convergence as an open question.
Nevertheless, the fact that the expected $\mathcal{M}$ operator is convergent would make it surprising (although not impossible) that the online version diverges in the presence of zero-mean, finite-variance noise with appropriately annealed stepsizes \cite{bertsekas1996neuro}.

\textbf{Choice of coefficients $\beta(\mathcal{F}_t)$.}
Our theorems guarantee convergence provided that
$\beta(\mathcal{F}_t) \leq \prod_{k=1}^t \rho_k$
for every $\mathcal{F}_t$.
Of course, not all choices of $\beta$ will perform well in practice.
At one extreme, when
$\beta(\mathcal{F}_t) = \prod_{k=1}^t \rho_k$
for every $\mathcal{F}_t$, we recover the standard IS estimator, which we know suffers from high variance and is often ineffectual.
At the other extreme, if
$\beta(\mathcal{F}_t) < \prod_{k=1}^t \pi(a_k|s_k)$
for every $\mathcal{F}_t$, then we have a method that cuts traces more than Tree Backup does;
it seems unlikely that such a method would learn faster than Tree Backup, making it difficult to justify the extra complexity of the non-Markov coefficients in this case.

We therefore see that while it is important to control the overall variance of the coefficients $\beta(\mathcal{F}_t)$, it is also important to maintain some minimum efficiency by avoiding unnecessary trace cuts, and to leverage the non-Markov capabilities of the operator.
Effective algorithms will likely compute the full IS ratio product $\prod_{k=1}^t \rho_k$ first and then apply some nonlinear transformation (e.g.\ clipping) to control the variance.
This ensures that the coefficients are cut only when necessary.

One final yet pertinent consideration is the discounted sum of coefficients
$\sum_{t \geq 0} \gamma^t \beta(\mathcal{F}_t)$.
Roughly speaking, this sum represents the potential magnitude of an update to a state-action pair.
If its value is extremely large along some trajectories with nonzero probability of occurrence, then the algorithm may not be stable.
This suggests that a constant bound on each $\beta(\mathcal{F}_t)$, or a recency heuristic that guarantees $\beta(\mathcal{F}_t) \geq \beta(\mathcal{F}_{t+1})$, may be desirable to control the update magnitude.

\subsection{Convergence for Specific Algorithms}
\label{sect:convergence_for_specific}

In Section~\ref{sect:example_algorithms}, we mentioned some algorithms that cannot be modeled by the $\mathcal{R}$ operator can, in fact, be modeled by the $\mathcal{M}$ operator.
It remains to show that their coefficients always satisfy our condition of
$\beta(\mathcal{F}_t) \leq \prod_{k=1}^t \rho_k$
to prove that they converge according to Theorems~\ref{theorem:evaluation} and~\ref{theorem:control}.

\textbf{Truncated IS.}
Clearly,
$\min (d_t, \prod_{k=1}^t \rho_k ) \leq \prod_{k=1}^t \rho_k$ for any choice of $d_t$, hence the algorithm converges.

\textbf{Non-Markov Retrace.}
\cite{munos2016safe} define the traces for the algorithm as
$c(s_t,a_t) = \lambda \min(1 \mathbin{/} \beta(\mathcal{F}_{t-1}), \rho_t)$,
so
$\beta(\mathcal{F}_{t-1}) = \prod_{k=1}^{t-1} c(s_k,a_k)$.
This means that
\begin{equation*}
    \beta(\mathcal{F}_t)
    = \beta(\mathcal{F}_{t-1}) c(s_t,a_t)
    = \lambda \min\left( \frac{1}{\beta(\mathcal{F}_{t-1})}, \rho_t \right) \beta(\mathcal{F}_{t-1})
    = \lambda \min\left( 1, \beta(\mathcal{F}_{t-1}) \rho_t \right)
    .
\end{equation*}
We can now prove by induction that the required bound always holds.
Assume that
$\beta(\mathcal{F}_{t-1}) \leq \prod_{k=1}^{t-1} \rho_k$.
Our base case is $\beta(\mathcal{F}_0) = 1$ by definition, which clearly holds.
Then, by hypothesis,
\begin{align*}
    \beta(\mathcal{F}_t)
    = \lambda \min\left( 1, \beta(\mathcal{F}_{t-1}) \rho_t \right)
    \leq \lambda \min\left( 1, \prod_{k=1}^t \rho_k \right)
    \leq \prod_{k=1}^t \rho_k
    ,
\end{align*}
and the non-Markov variant of Retrace therefore converges.

\section{Conclusion}
\label{sect:conclusion}

Although per-decision traces are convenient from a computational perspective, they are not expressive enough to implement arbitrary off-policy corrections, and their inability to reverse previously cut traces can lead to inefficient learning.
By removing the assumption of Markov (i.e.\ per-decision) traces, our $\mathcal{M}$ operator enables off-policy algorithms that jointly consider all of the agent's past experiences when assigning credit.
This provides myriad opportunities for efficient off-policy algorithms and offers the first convergence guarantees for many previous algorithms.

The $\mathcal{M}$ operator is convergent for both policy evaluation and control.
In the latter setting, our proof removes the assumptions of increasingly greedy policies and pessimistic initialization.
This has significant implications for Retrace, the other algorithms discussed in Section~\ref{sect:example_algorithms}, Watkins' Q($\lambda$), and more.

The generality of the $\mathcal{M}$ operator means that it provides convergence guarantees for a number of TD methods that we were not able to discuss in the main text.
These include methods with variable discount factors or $\lambda$-values
\cite[e.g.][]{watkins1989learning, sutton1998reinforcement, maei2010gq, sutton2014new, mahmood2017multi, sutton2018reinforcement},
and even history-dependent $\lambda$-values
\cite[e.g.][]{singh1996reinforcement, yu2018generalized}.
All of these can be expressed in a common form:
$\beta(\mathcal{F}_t) = \prod_{k=1}^t \gamma(\mathcal{F}_k) \lambda(\mathcal{F}_k) \rho_k$
where
$\gamma(\mathcal{F}_k) \in [0,1]$
and
$\lambda(\mathcal{F}_k) \in [0,1]$.
Clearly,
$\beta(\mathcal{F}_t) \leq \prod_{k=1}^t \rho_k$
and we determine that these methods converge.

An interesting variant of our theory is one where the coefficient-determining function $\beta$ is not fixed but instead varies with time.
In this case, the operator $\mathcal{M}$ becomes non-stationary;
each diagonal matrix $B_t$ in equation (\ref{eq:operator}) depends not only on the number of timesteps $t$ since the action was taken, but also the time $t'$ when the action took place.
Such instances may arise, for example, when coefficients depend on information external to the trajectory $\mathcal{F}_t$, such as the number of previous visitations to the state-action pair \cite[e.g.][]{sutton1994step}.
Provided that the same conditions of
$B_t \leq \prod_{k=1}^t \rho_k$
hold for all $t'$, then the proofs of Theorems~\ref{theorem:evaluation} and~\ref{theorem:control} should go through with no modifications.

Important open questions remain.
Namely, do the convergence guarantees established here hold for online algorithms?
And, more practically, what non-Markov algorithms exist that lead to reliably efficient and stable learning?
Interesting possibilities with regard to both of these questions are recursive methods like Non-Markov Retrace, which are easier to implement and to analyze according to our theoretical framework.
In conjunction, our earlier considerations for choosing coefficients (Section~\ref{sect:practical_concerns}) are pertinent.
These should be interesting starting points for future developments in off-policy learning.

\small{
\bibliography{references}}

\clearpage
\appendix

\section{Proofs}

\subsection{Proof of Lemma~\ref{lemma:diff}}
\label{app:lemma_diff}

\begin{proof}
    We begin by rewriting the operator $\mathcal{M}$ in (\ref{eq:operator}):
    \begin{align*}
        \allowdisplaybreaks
        \mathcal{M}Q &= Q + \sum_{t \geq 0} \gamma^t P_\mu^t B_t (T_\pi Q - Q) \\
        &= Q + \sum_{t \geq 0} \gamma^t P_\mu^t B_t (T_\pi Q - R) + \sum_{t \geq 0} \gamma^t P_\mu^t B_t (R - Q) \\
        &= R + \sum_{t \geq 0} \gamma^t P_\mu^t B_t (T_\pi Q - R) + \sum_{t \geq 1} \gamma^t P_\mu^t B_t (R - Q)
        .
    \end{align*}
    It is evident from (\ref{eq:operator}) that $Q^\pi$ is the fixed point of $\mathcal{M}$ because $Q = Q^\pi \implies T_\pi Q - Q = 0$.
    Therefore,
    \begin{align*}
        \allowdisplaybreaks
        \mathcal{M}Q - Q^\pi
        &= \mathcal{M}Q - \mathcal{M}Q^\pi \\
        &= \sum_{t \geq 0} \gamma^t P_\mu^t B_t (T_\pi Q - T_\pi Q^\pi) - \sum_{t \geq 1} \gamma^t P_\mu^t B_t (Q - Q^\pi) \\
        &= \sum_{t \geq 0} \gamma^{t+1} P_\mu^t B_t P_\pi (Q - Q^\pi) - \sum_{t \geq 1} \gamma^t P_\mu^t B_t (Q - Q^\pi) \\
        &= \sum_{t \geq 1} \gamma^t P_\mu^{t-1} B_{t-1} P_\pi (Q - Q^\pi) - \sum_{t \geq 1} \gamma^t P_\mu^t B_t (Q - Q^\pi) \\
        &= \sum_{t \geq 1} \gamma^t P_\mu^{t-1} ( B_{t-1} P_\pi - P_\mu B_t) (Q - Q^\pi)
        ,
    \end{align*}
    which is the desired quantity.
\end{proof}

\subsection{Proof of Lemma~\ref{lemma:matrix_norm_bound}}
\label{app:matrix_norm_bound}

\begin{proof}
    We start with the negative of the original matrix expression:
    $C(I - \gamma P_\pi) - I = (C - I) - \gamma C P_\pi$.
    This expression is affine in $C$;
    we can check its two extremal values to determine the norm.

    Recall that $C = \sum_{t \geq 0} \gamma^t P_\mu^t B_t$, a nonnegative matrix.
    One extreme occurs when $B_t = 0$ for $t \geq 1$ and hence $C = I$;
    substituting this into the expression, we obtain $- \gamma P_\pi$ and deduce that $\norm{C(I - \gamma P_\pi) - I} = \gamma$ when $C = I$.

    The other extreme occurs when each $B_t$ is maximized.
    By our criteria for $B_t$ from Theorem~\ref{theorem:evaluation}, we must have $P_\mu^t B_t \leq P_\pi^t$ and therefore
    $C \leq \sum_{t \geq 0} \gamma^t P_\pi^t = (I - \gamma P_\pi)^{-1}$.
    We can also deduce in this case that 
    \begin{align*}
        \allowdisplaybreaks
        C - I
        \leq (\gamma P_\pi + \gamma^2 P_\pi + \cdots)
        = \gamma (I + \gamma P_\pi + \cdots) P_\pi
        = \gamma (I - \gamma P_\pi)^{-1} P_\pi
        .
    \end{align*}
    Substituting these into the expression,
    we obtain
    $\gamma (I - \gamma P_\pi)^{-1} P_\pi - \gamma (I - \gamma P_\pi)^{-1} P_\pi = 0$
    when
    $C = (I - \gamma P_\pi)^{-1}$.

    Combining these two cases, we conclude that
    $\norm{I - C(I - \gamma P_\pi)} = \norm{C(I - \gamma P_\pi) - I} \leq \gamma$.
\end{proof}

\end{document}